\documentclass[letterpaper]{article}
\usepackage[preprint]{aaai2027}
\usepackage[hyphens]{url}  % DO NOT CHANGE THIS
\usepackage{graphicx} % DO NOT CHANGE THIS
\usepackage{natbib}  % DO NOT CHANGE THIS AND DO NOT ADD ANY OPTIONS TO IT
\usepackage{caption} % DO NOT CHANGE THIS AND DO NOT ADD ANY OPTIONS TO IT
\usepackage{algorithm}
\usepackage{algorithmic}
\usepackage{booktabs}
\usepackage{amsmath,amssymb,amsfonts}

\newtheorem{definition}{Definition}
\newtheorem{theorem}{Theorem}

\title{COVENANT: Natural-Language Workflow Compilation for Aligned Agent Execution}

\author{
    Jincheng Wang, Min Zheng, Tao Wei
}
\affiliations{
    Ant Group\\
    \texttt{wjcuhk@gmail.com, zhengmin.zm@antgroup.com, lenx.wei@antgroup.com}
}

\begin{document}

\maketitle

\begin{abstract}
Large language model (LLM) agents are increasingly entrusted with natural-language workflow instructions (e.g., retail-payment policies) that specify not only what outcome to achieve, but also which steps, branches, and tool interactions are permitted. When these instructions are supplied as prompt context, however, the model retains control over both procedure selection and step execution. As interactions accumulate, an agent can skip required steps, take unsupported branches, or execute a valid step with unsupported arguments or effects---a failure mode we call \emph{workflow misalignment}. In this work, we propose COVENANT, a compiler-and-interpreter architecture for workflow-aligned agent execution. Our key insight is to treat workflow instructions as source programs rather than prompts. COVENANT converts the instructions into a workflow abstract syntax tree (WAST) and lowers it to a workflow control-flow graph (WCFG). At runtime, a controller interprets the WCFG one node at a time, checks each proposal against requirements extracted from the instructions before committing controller state or advancing the graph, and returns diagnostic feedback for repair. To evaluate COVENANT, we use 120 cases from three existing benchmarks, spanning seven workflow scenarios. Compared with state-of-the-art LLM agents, COVENANT improves benchmark success from 50.00\% to 83.33\% and reduces the workflow-misalignment failure rate from 42.50\% to 15.83\% (62.75\% relative). These results show that COVENANT substantially mitigates workflow misalignment, moving LLM-agent alignment beyond isolated prompt following toward reliable execution of complex and multi-step workflows.
\end{abstract}

\section{Introduction}
\label{sec:introduction}

Large language model (LLM) agents are increasingly used to handle complex tasks that require multi-step reasoning, tool use, and interaction with external environments. Many such tasks are governed by workflow instructions written in natural language, including standard operating procedures (SOPs), service policies, compliance procedures, and tool-use protocols~\cite{guidebench2025,lu2024toolsandbox,yao2024taubench}. For example, a weather-assistance workflow may require reusing a location already supplied by the user, while an airline-booking workflow may require retrieving a missing passenger birth date through the profile tool before booking. The responsibility for following such instructions step by step, traditionally assigned to human operators, is now increasingly delegated to LLM agents. We call this setting \emph{natural-language workflow following}. In practice, agents following complex natural-language workflows frequently drift from the prescribed execution by skipping required steps, choosing unsupported branches, or incorrectly performing required tool interactions, leading to incorrect outcomes or premature workflow termination. Such deviations constitute workflow misalignment. Figure~\ref{fig:motivating} illustrates two representative examples.

\begin{figure*}[!t]
\centering
\includegraphics[width=0.85\textwidth]{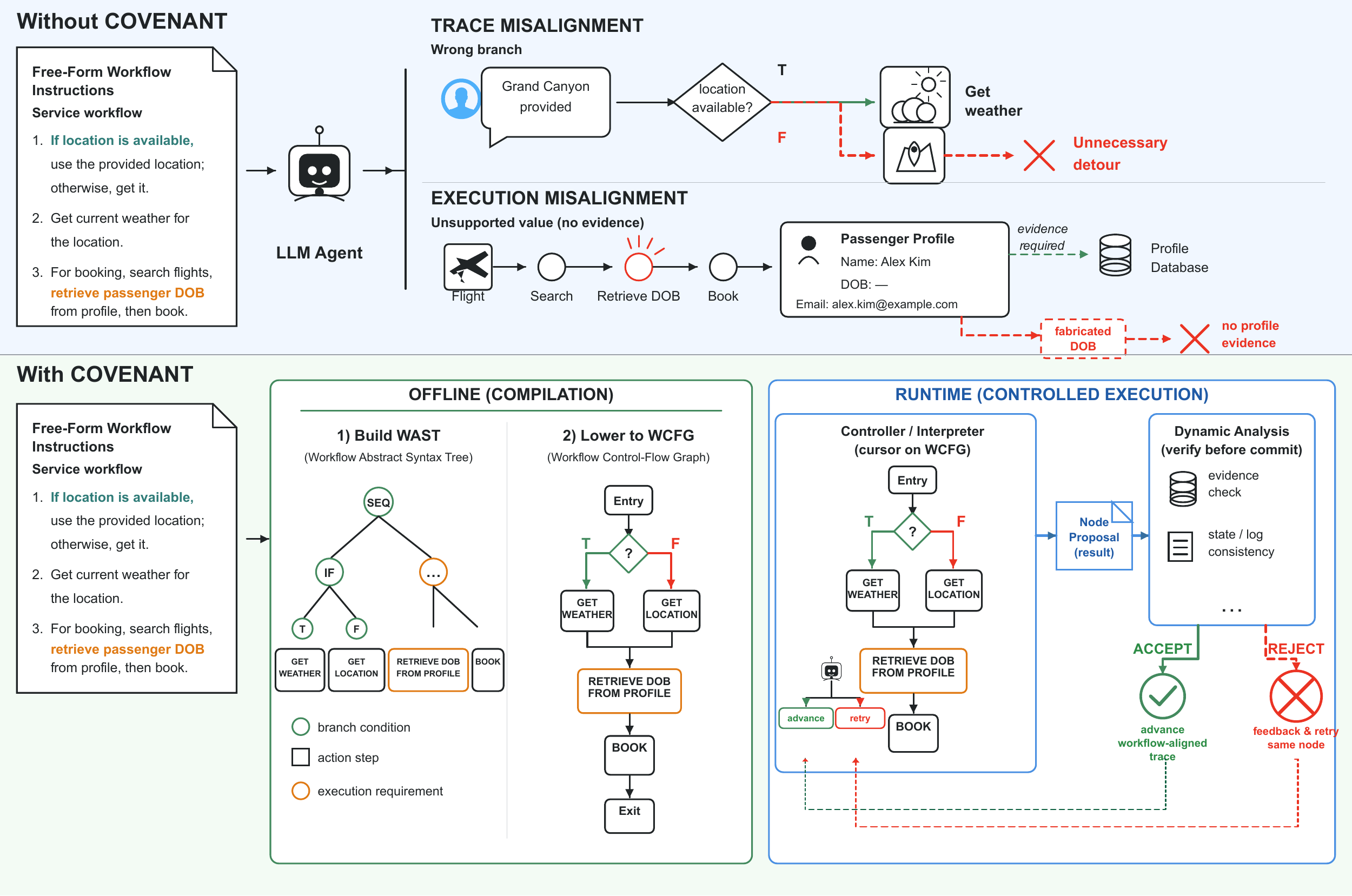}
\caption{Problem-to-solution overview of COVENANT. An LLM agent can select an unsupported branch (trace misalignment) or execute a selected step without required evidence (execution misalignment). COVENANT compiles natural-language workflow instructions into a WAST and WCFG, then combines controller-guided traversal with runtime verification to support workflow-aligned execution.}
\label{fig:motivating}
\end{figure*}

Recent systems such as AgentLTL and FlowAgent have begun to improve procedural compliance and controlled workflow execution~\cite{agentltl2026,flowagent2025}. However, they generally assume that workflow instructions have already been rewritten into formal or semi-structured representations with prescribed syntax. This assumption is difficult to satisfy in practice. Real-world workflow instructions are commonly written for people in flexible document formats and unrestricted natural language; requiring users to translate them into prescribed representations introduces substantial authoring and maintenance costs, limiting the direct applicability of existing methods. This work therefore asks: \emph{Given workflow instructions written in natural language, how can we ensure that an LLM agent's execution trace remains aligned with them?}

To address this issue, our key observation is that workflow instructions, although written in free-form natural language, usually encode program-like procedural structure, including ordered steps, branch conditions, and execution requirements. Inspired by traditional compilation~\cite{aho2006compilers,lattner2004llvm}, we therefore treat workflow instructions as source programs and propose \emph{COVENANT}, a compiler-and-interpreter architecture for workflow-aligned agent execution. As Figure~\ref{fig:motivating} shows, an offline compiler first processes raw workflow instructions to construct a workflow abstract syntax tree (WAST), then lowers the WAST into a workflow control-flow graph (WCFG). At runtime, a controller interprets the WCFG one node at a time, asks the target LLM agent to execute the active node, and checks the result against requirements extracted from the instructions. The controller advances only after the result passes; otherwise, it returns diagnostic feedback for repair. Controller-owned traversal targets trace misalignment, while node-level checking targets execution misalignment.

We evaluate COVENANT across seven real-world workflow scenarios. Example scenarios include multi-turn personal-assistant tool use, airline reservation, and retail order management, covering distinct practical domains and workflow structures. Results show that COVENANT consistently improves workflow alignment across diverse agent systems and underlying LLMs. In its best evaluated configuration, Hermes Agent with GLM-5.1~\cite{nousresearch2026hermesagent,glm5team2026glm5}, COVENANT raises benchmark success from 50.00\% to 83.33\% and reduces the workflow-misalignment failure rate from 42.50\% to 15.83\% (62.75\% relative reduction).

In summary, our contributions are:
\begin{itemize}
\item To the best of our knowledge, COVENANT is the first system to address natural-language workflow misalignment issues. We formulate natural-language workflow alignment as an agent--environment trace-completeness problem (\S\ref{sec:problem-definition}).
\item We introduce COVENANT, which adapts compilation theory to natural-language workflows, then uses a controller to interpret the compiled artifact and supervise execution.
\item We provide a systematic evaluation over diverse agent systems and base LLMs in seven workflow scenarios. The results show that COVENANT generalizes across real-world workflow instructions written in free-form natural language, reducing the workflow-misalignment failure rate by 62.75\% (\S\ref{sec:evaluation}).
\end{itemize}

\section{Problem Definition}
\label{sec:problem-definition}

\subsection{Workflow Execution}
\label{sec:workflow-execution}

We model workflow execution as an interaction among an agent, a workflow specification, and an environment. Let $\mathcal{S}$ be the set of agent states, $\mathcal{A}$ the set of actions, $\mathcal{O}$ the set of feedback observations produced by the environment, and $\mathcal{W}$ the set of workflow specifications. A \emph{workflow specification} $W\in\mathcal{W}$ is a procedural instruction document that specifies how a task should be carried out. It may prescribe ordered steps, branch conditions, action constraints, and completion requirements. We treat the task input $I\in\mathcal{O}$ as the initial observation $o_0=I$. The agent's policy is
\begin{equation}
\pi:\mathcal{S}\times\mathcal{W}\times\mathcal{O}\rightarrow\mathcal{A},
\qquad
a'_t=\pi(s_t,W,o_t),
\label{eq:agent-policy}
\end{equation}
where $s_t\in\mathcal{S}$ is the agent state, $o_t\in\mathcal{O}$ is the latest environment feedback, and $a'_t\in\mathcal{A}$ is the action selected by the agent at step $t$. In particular, execution begins with $a'_0=\pi(s_0,W,I)$.

The selected action changes the environment, which returns the next feedback. Let $h_t=\langle I,a'_0,o_1,\ldots,a'_{t-1},o_t\rangle$ be the interaction history before step $t$, $\mathsf{Env}$ the environment feedback function, and $U$ the agent-state update function. The interaction evolves as
\begin{equation}
o_{t+1}=\mathsf{Env}(h_t,a'_t),
\qquad
s_{t+1}=U(s_t,a'_t,o_{t+1}).
\label{eq:agent-environment-step}
\end{equation}
Thus, $W$ remains fixed as procedural guidance, while the policy and environment jointly determine the actions and feedback observed during execution.

For a workflow specification $W$ and task input $I$, let $\mathcal{T}(W,I)$ denote the set of interaction traces permitted by the workflow. A trace records the task input, the ordered actions, and the feedback returned by the environment. Any $\tau(W,I)\in\mathcal{T}(W,I)$ is therefore an intended workflow execution. The policy--environment interaction produces the realized trace
\begin{equation}
\tau_{\pi,\mathsf{Env}}(W,I)
=\langle I,a'_0,o_1,a'_1,o_2,\ldots,a'_{m-1},o_m\rangle.
\label{eq:realized-workflow-trace}
\end{equation}
Because a workflow may contain branches or allow equivalent actions, $\mathcal{T}(W,I)$ may contain more than one valid trace.

Specifically, in this work, we define workflow alignment as follows.

\begin{definition}[Workflow Alignment]
\label{def:workflow-alignment}
Given $W$ and $I$, the execution induced by policy $\pi$ and environment $\mathsf{Env}$ is \emph{workflow-aligned} if and only if
\begin{equation}
\tau_{\pi,\mathsf{Env}}(W,I)
\in \mathcal{T}(W,I).
\label{eq:workflow-alignment}
\end{equation}
Otherwise, the execution is \emph{workflow-misaligned}.
\end{definition}

In this work, we assume that the environment $\mathsf{Env}$ is benign and trusted: given an action and its arguments, it faithfully executes the requested operation and returns the resulting feedback. Under this assumption, violating the alignment requirement in Equation~\ref{eq:workflow-alignment} indicates a failure of policy $\pi$. We refer to a change in the permitted procedural path as \emph{trace misalignment} and an incorrect realization of a selected step as \emph{execution misalignment}; Appendix~\ref{app:workflow-misalignment} provides the detailed characterization and controlled stress analysis.

\section{Workflow Compilation}
\label{sec:workflow-compilation}

Workflow instructions encode program-like procedures, but asking an LLM to reinterpret and orchestrate the complete procedure at runtime permits execution drift. Inspired by traditional compilation~\cite{aho2006compilers,lattner2004llvm}, COVENANT instead treats the instructions as a source program. Its offline compiler constructs a \emph{workflow abstract syntax tree} (WAST) that makes actions, guards, and their hierarchy explicit, then lowers it to a \emph{workflow control-flow graph} (WCFG). Executable and decision nodes represent workflow steps and guards, while directed edges encode permitted transitions. The resulting WCFG is analogous to graph-structured workflow bytecode interpreted by the runtime controller (Figure~\ref{fig:motivating}).

\subsection{Offline Compilation}
\label{subsec:offline-compilation}

Offline compilation first reconstructs the instruction hierarchy as a WAST and then lowers it to a WCFG. Figure~\ref{fig:wast-wcfg-syntax} defines both artifacts: the WAST organizes inputs, resources, outputs, steps, and branches, while the WCFG represents the procedure as typed nodes connected by guarded edges with explicit state access.

\begin{figure}[t]
\centering
\begingroup
\setlength{\tabcolsep}{2pt}
\renewcommand{\arraystretch}{1.12}
\hrule
\vspace{4pt}
\centering
\textbf{Workflow Abstract Syntax Tree (WAST)}\\[3pt]
\footnotesize
\begin{tabular}{@{}r@{ }c@{ }l@{}}
$\langle\mathit{WAST}\rangle$ & $::=$ & \texttt{workflow} $\langle Id\rangle\;\langle Input\rangle^{*}\;\langle Resource\rangle^{*}$ \\
& & $\langle Tree\rangle\;\langle Output\rangle^{*}$ \\
$\langle Tree\rangle$ & $::=$ & \texttt{seq} $\langle Tree\rangle^{+}$ $\mid$ \texttt{step} $\langle Elem\rangle^{+}$ \\
& $\mid$ & \texttt{branch} $\langle Guard\rangle\;\langle Tree\rangle\;\langle Tree\rangle^{?}$ \\
$\langle Guard\rangle$ & $::=$ & \texttt{guard} $\langle Pred\rangle$ \\
$\langle Elem\rangle$ & $::=$ & \texttt{action} $\langle Task\rangle$ $\mid$ \texttt{output} $\langle Slot\rangle$
\end{tabular}
\par\medskip
\textbf{Workflow Control-Flow Graph (WCFG)}\\[3pt]
\begin{tabular}{@{}r@{ }c@{ }l@{}}
$\langle\mathit{WCFG}\rangle$ & $::=$ & \texttt{graph} $\langle State\rangle^{*}\;\langle Node\rangle^{+}\;\langle Edge\rangle^{+}$ \\
$\langle State\rangle$ & $::=$ & \texttt{slot} $\langle Id\rangle$ \texttt{:} $\langle Type\rangle$ \\
$\langle Node\rangle$ & $::=$ & \texttt{entry} $\langle Id\rangle$ $\mid$ \texttt{merge} $\langle Id\rangle$ \\
& $\mid$ & \texttt{block} $\langle Id\rangle$ \texttt{task} $\langle Task\rangle\;\langle Access\rangle$ \\
& $\mid$ & \texttt{decision} $\langle Id\rangle$ \texttt{guard} $\langle Pred\rangle$ \\
& & $\langle Access\rangle$ \\
& $\mid$ & \texttt{exit} $\langle Id\rangle$ \texttt{output} $\langle Slot\rangle^{*}$ \\
$\langle Access\rangle$ & $::=$ & \texttt{read} $\langle Slot\rangle^{*}$ \texttt{write} $\langle Slot\rangle^{*}$ \\
$\langle Edge\rangle$ & $::=$ & $\langle Id\rangle \rightarrow \langle Id\rangle$ $[\,$\texttt{when} $\langle Guard\rangle\,]$
\end{tabular}
\vspace{4pt}
\hrule
\endgroup
\caption{Abstract syntax of WAST and WCFG.}
\label{fig:wast-wcfg-syntax}
\end{figure}

\begin{figure*}[t]
\centering
\includegraphics[width=\textwidth]{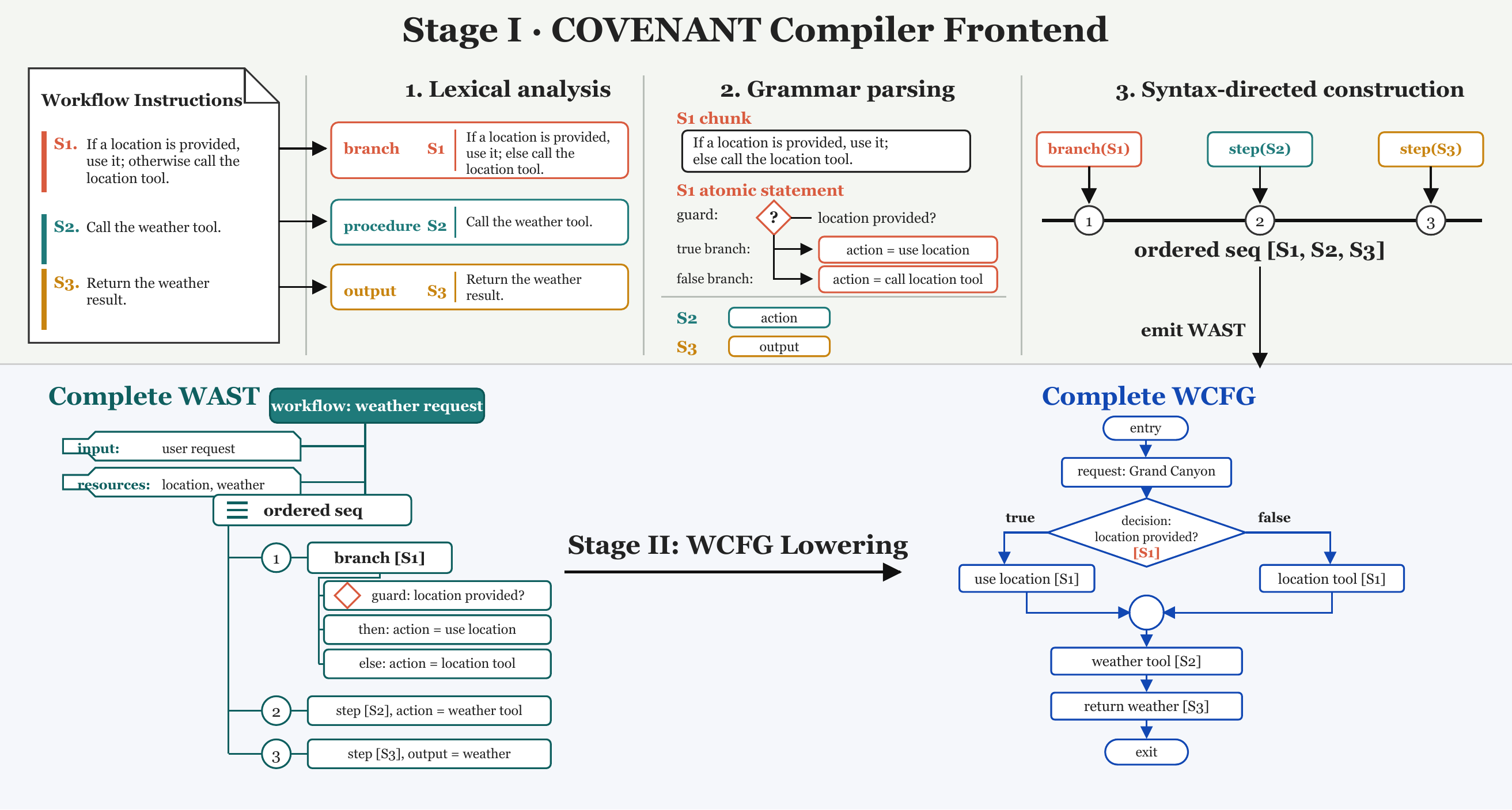}
\caption{Compiler-style construction and lowering of the Grand Canyon weather workflow from Figure~\ref{fig:motivating}. Stage I segments the instructions into labeled chunks, parses typed atomic statements, and assembles the WAST; Stage II lowers the WAST into a WCFG with explicit runtime transitions.}
\label{fig:compilation-example}
\end{figure*}

\subsubsection{Stage I: Hierarchical WAST Construction}
\label{subsec:wast-construction}

Conventional compiler frontends construct an abstract syntax tree (AST) through lexical analysis, grammar parsing, and syntax-directed construction~\cite{aho2006compilers}. COVENANT retains this three-pass organization but adapts it to instructions without fixed token classes or a complete grammar; Figure~\ref{fig:compilation-example} shows the intermediate artifacts. \textbf{Lexical analysis} uses layout rules to split the document at headings, numbered steps, and paragraph boundaries, then a schema-constrained LLM labels each chunk by procedural role. \textbf{Grammar parsing} treats Figure~\ref{fig:wast-wcfg-syntax} as an output grammar and extracts ordered actions, guards, and their explicit alternatives from each retained chunk. \textbf{Syntax-directed construction} deterministically maps actions to \texttt{step} nodes, guarded alternatives to \texttt{branch} nodes, and source order to \texttt{seq}, leaving unresolved associations explicit. The LLM therefore supplies local semantic interpretation, while the compiler controls global tree assembly.

\subsubsection{Stage II: WCFG Lowering}
\label{subsec:wcfg-construction}

The compiler's \emph{lowering stage} converts the tree into a control-flow graph. In Figure~\ref{fig:compilation-example}, the S1 branch becomes a location decision followed by two action blocks; both paths join before the S2 weather-tool block, which is followed by the S3 return block. In general, each ordered action maps to a \texttt{block}, a guard becomes a \texttt{decision} with true and false successors, and a \texttt{merge} reconnects the two branches before their next shared statement. Required final outputs become fields of an \texttt{exit} node. Algorithm~\ref{alg:wast-to-wcfg} in Appendix~\ref{app:wcfg-lowering} gives the complete recursive construction.

This lowering is path preserving: every entry-to-exit path in the WCFG corresponds to a path in the WAST, and every WAST path is preserved in the WCFG. Consequently, if the WAST contains only control paths permitted by the workflow instructions, the generated WCFG also contains no invalid control path, thereby aligning its control paths with the workflow-permitted traces in $\mathcal{T}(W,I)$ from Definition~\ref{def:workflow-alignment}. Appendix~\ref{app:wcfg-path-soundness} formalizes and proves these claims.

\section{Runtime Workflow Execution}
\label{sec:runtime-workflow-execution}
\label{sec:runtime-execution}

At runtime, workflow alignment reduces to two questions: does execution follow a WCFG path, and is each visited node executed correctly? Acting as an interpreter, the controller addresses the first by owning graph traversal and the second by checking node-bound instruction requirements before accepting a result. Failed checks trigger diagnostic feedback and repair without advancing the graph.

\subsection{Controller-Guided WCFG Execution}
\label{subsec:runtime-controller}

The controller owns the active-node cursor and exposes only that node's task, relevant interaction history $h_t$, supporting instructions, and response schema to the target agent. The agent still follows policy $\pi$, but this bounded context confines its current execution to the prescribed action. For an environment action, the proposal $a'_t$ and returned observation $o_{t+1}$ extend $h_t$ according to Equation~\ref{eq:agent-environment-step}; the structured result may also contain a state update, evidence, and branch choice. The controller advances only after acceptance, so rejection leaves the cursor and committed state unchanged. Appendix~\ref{app:algorithm} details the controller algorithm with a running example.

\noindent\textbf{Runtime Verification and Repair.} During offline compilation, COVENANT binds instruction constraints and evidence or effect obligations to WCFG nodes as \emph{semantic properties}. Protocol checks validate the controller--agent exchange, while property checks validate the proposed action, state update, or output. Failure retains the current node and supplies repair feedback; success commits the result before traversal advances, and exhausted retries block execution. Controller-owned traversal therefore targets trace misalignment, while the verify--repair--commit boundary targets execution misalignment. Appendix~\ref{app:semantic-properties} lists the properties and their enforcement guarantees.

\section{Evaluation}
\label{sec:evaluation}

The evaluation follows the two-part argument developed above: first establish whether the complete system improves task correctness and workflow alignment, then examine where the difference comes from and what it costs. We organize this analysis around four research questions (RQs). \textbf{RQ1} asks whether COVENANT improves end-to-end task correctness and reduces workflow-misalignment failures for the same agent interface and base model. \textbf{RQ2} asks in which evaluation scenarios the gains and regressions occur and whether they arise from trace or execution misalignment. \textbf{RQ3} studies the compiler, controller, and checking mechanisms through diagnostic variants. \textbf{RQ4} measures execution cost. The primary correctness metric is \emph{benchmark-native task success}: a binary pass or fail assigned by each benchmark's official evaluator.

\subsection{Experimental Setup}

Our evaluation uses 120 cases from GuideBench, ToolSandbox, and tau-bench, organized into seven workflow scenarios spanning conditional rules, argument canonicalization, insufficient information, multiple user turns, state dependencies, airline, and retail~\cite{guidebench2025,lu2024toolsandbox,yao2024taubench}. We retain each benchmark's official evaluator; Appendix~\ref{app:evaluation-details} reports the scenario composition and integration protocols.

We evaluate five agent interfaces---plain prompting, ReAct~\cite{yao2023react}, Claude Code, OpenClaw, and Hermes Agent---with five base models: GLM-5.1, GLM-5.2, MiniMax-M2.7, Kimi-K2.7-Code, and DeepSeek-V4-Pro. Each of the resulting 25 paired cells runs the same cases with the target agent alone and with COVENANT controlling that agent. The base model is fixed within each pair, so the comparison measures system effectiveness under a fixed model rather than equal compute. Appendix~\ref{app:evaluation-details} details execution modes, failure handling, and frozen artifacts.

\subsection{RQ1: Overall Effectiveness}

We answer RQ1 from two complementary perspectives. Table~\ref{tab:main-comparison} measures whether COVENANT improves end-to-end task correctness, while Figure~\ref{fig:misalignment-rate} measures whether it reduces failures that the saved execution evidence attributes to workflow misalignment.

\begin{table*}[t]
\centering
\small
\setlength{\tabcolsep}{5pt}
\newcommand{\pairscore}[2]{#1\,/\,\textbf{#2}}
\begin{tabular}{@{}lcccccc@{}}
\toprule
\textbf{Agent Interface} & \multicolumn{5}{c}{\textbf{Base Model}} & \textbf{Mean $\Delta$/Gap} \\
\cmidrule(lr){2-6}
& GLM-5.1 & GLM-5.2 & MiniMax & Kimi & DeepSeek & \\
\midrule
\multicolumn{7}{l}{\textit{Paired evaluation:}\quad Target Agent\,/\,\textbf{COVENANT + Target Agent}} \\
Prompting & \pairscore{44.17}{77.50} & \pairscore{47.50}{80.00} & \pairscore{36.67}{69.17} & \pairscore{41.67}{80.00} & \pairscore{40.83}{68.33} & \textbf{+32.83} \\
ReAct & \pairscore{44.17}{79.17} & \pairscore{46.67}{80.83} & \pairscore{42.50}{72.50} & \pairscore{45.83}{75.83} & \pairscore{36.67}{68.33} & \textbf{+32.17} \\
Claude Code & \pairscore{49.17}{75.83} & \pairscore{48.33}{82.50} & \pairscore{36.67}{65.83} & \pairscore{51.67}{74.17} & \pairscore{40.83}{65.83} & \textbf{+27.50} \\
OpenClaw & \pairscore{45.83}{82.50} & \pairscore{46.67}{82.50} & \pairscore{32.50}{61.67} & \pairscore{47.50}{77.50} & \pairscore{40.83}{68.33} & \textbf{+31.83} \\
Hermes & \pairscore{50.00}{83.33} & \pairscore{47.50}{82.50} & \pairscore{33.33}{54.17} & \pairscore{49.17}{59.17} & \pairscore{33.33}{60.00} & \textbf{+25.17} \\
\midrule
\multicolumn{7}{l}{\textit{Single-only workflow systems:}\quad Target Agent only; reference = best COVENANT (83.33)} \\
FlowAgent & 32.50 & 24.17 & 20.00 & 33.33 & 20.00 & \textbf{+57.33} \\
NLAH/LinguaClaw & 46.67 & 23.33 & 19.17 & 25.00 & 20.83 & \textbf{+56.33} \\
MermaidFlow & 25.00 & 27.50 & 23.33 & 24.17 & 23.33 & \textbf{+58.66} \\
Agent SOP & 43.33 & 50.00 & 47.50 & 28.33 & 45.83 & \textbf{+40.33} \\
\bottomrule
\end{tabular}
\caption{Official benchmark task success (\%) on 120 cases from three existing benchmarks. Paired rows report mean paired $\Delta$. For single-only systems, Gap is the best COVENANT cell (83.33\%) minus the baseline's five-model mean; it is not a paired effect.}
\label{tab:main-comparison}
\end{table*}

\begin{figure*}[t]
\centering
\includegraphics[width=0.94\textwidth]{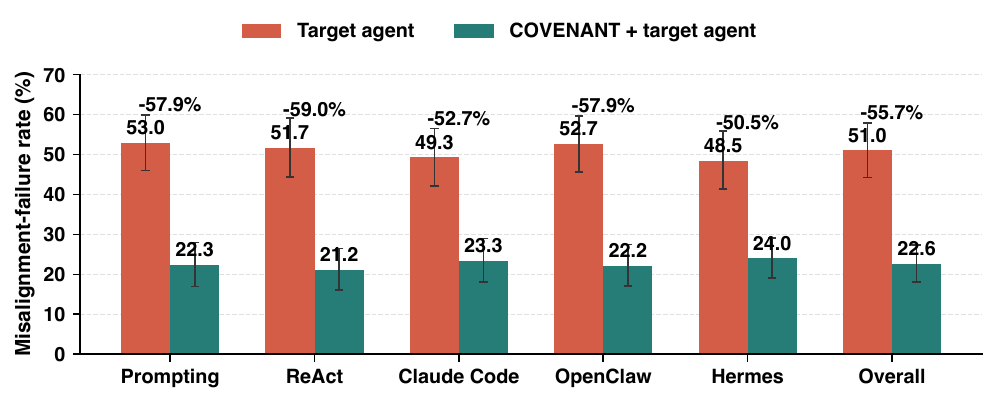}
\caption{Workflow-misalignment failure rates aggregated across five base models. GLM-5.1 is used only to annotate failed traces. Error bars are 95\% bootstrap intervals clustered by workflow case; percentages above each group are relative reductions.}
\label{fig:misalignment-rate}
\end{figure*}

\begin{figure*}[t]
\centering
\includegraphics[width=0.94\textwidth]{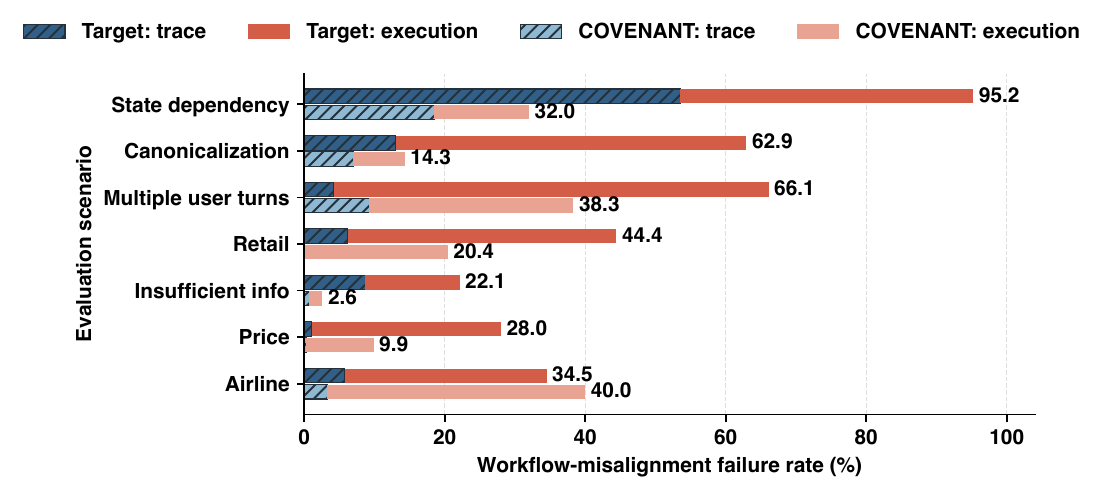}
\caption{Scenario-level GLM-5.1-attributed workflow-misalignment failure rates, decomposed into trace and execution misalignment.}
\label{fig:rq2-details}
\end{figure*}

COVENANT improves all 25 paired cells. The strongest configuration, Hermes Agent with GLM-5.1, solves 100/120 cases (83.33\%), versus 60/120 (50.00\%) without COVENANT. Across 3,000 paired instances, success rises from 1,296 (43.20\%) to 2,193 (73.10\%), a gain of 29.90 points. Because the same cases recur across cells, this aggregate is descriptive rather than a set of independent observations.

The lower panel places these paired results beside FlowAgent, NLAH/LinguaClaw, MermaidFlow, and Agent~SOP~\cite{flowagent2025,nlah2026,zheng2025mermaidflow,strandsagentsops2026}. Their five-model means range from 24.67\% to 43.00\%, leaving 40.33--58.66 point gaps from the best COVENANT result. Although these are unpaired reference gaps, they show the value of combining free-form workflow compilation with controller-owned traversal and node-level checking, rather than only generating or executing a structured workflow. Appendix~\ref{app:evaluation-details} documents unsupported cases and adapter scope.

To determine whether the corrected outcomes reflect alignment, GLM-5.1 labels each native failure as trace misalignment, execution misalignment, or unsupported attribution. The \emph{workflow-misalignment failure rate} is the fraction of all executions receiving either misalignment label; Appendix~\ref{app:evaluation-details} gives the acceptance and bootstrap protocols. Across 3,000 executions per mode, this rate falls from 51.03\% (1,531) to 22.60\% (678), a 55.72\% relative reduction. The clustered-bootstrap 95\% interval for the 28.43-point absolute reduction is 21.73--35.27 points. Figure~\ref{fig:misalignment-rate} shows reductions for every agent family and all 25 agent--model cells.

\subsection{RQ2: Scenario-Level Results and Misalignment Analysis}

Following the trace/execution distinction in Appendix~\ref{app:workflow-misalignment}, RQ2 locates the aggregate gain across scenarios and failure types. Native task success improves most in state dependency (+65.75 points), canonicalization (+46.95), and retail (+38.45); airline is the only negative scenario ($-4.00$). Figure~\ref{fig:rq2-details} reports each misalignment class over all executions in a scenario, aggregated across the 25 agent--model cells.

COVENANT lowers the attributed misalignment rate in six of seven scenarios. The largest reductions occur in state dependency (95.25\% to 32.00\%) and canonicalization (62.95\% to 14.32\%), where the task-success gains are also largest. Across all scenarios, the trace-misalignment rate falls from 12.93\% to 5.60\%, a 56.70\% relative reduction, while the execution-misalignment rate falls from 38.10\% to 17.00\%, a 55.38\% relative reduction. The decomposition also identifies a remaining traversal weakness: in multiple-user-turn tasks, execution misalignment decreases substantially but trace misalignment rises from 4.21\% to 9.26\%.

Airline is the only negative scenario. Although trace misalignment falls from 5.75\% to 3.25\%, execution misalignment rises from 28.75\% to 36.75\%; correspondingly, 73 paired instances succeed only for the target agent, whereas 57 succeed only with COVENANT. The regression is concentrated: five tasks account for 62 of the 73 target-only successes (84.93\%), and \texttt{airline\_0029} regresses in 24 of 25 agent--model cells. This task requires cancelling every qualifying single-passenger reservation, but the saved traces frequently show COVENANT cancelling only one reservation, leaving another obligation unresolved, or mishandling mandatory confirmation. These cases expose a limitation of the current mediated Airline integration: its workflow does not explicitly represent repeated multi-reservation closure or policy-dependent refusal and transfer conditions. It therefore reduces path errors but can stop after partial completion or continue under an inapplicable policy, increasing errors within selected steps.

\subsection{RQ3: Diagnostic Mechanism Analysis}

COVENANT's offline audit accepts all 120 generated WCFGs and rejects all 600 deterministically corrupted graphs, showing that malformed artifacts fail the checked invariants before runtime. Table~\ref{tab:ablation} then compares four runtime configurations using Claude Code with GLM-5.1 on 85 cases. \emph{Controller + FlowAgent PDL} exposes actions whose API preconditions hold but lets the agent select among them~\cite{flowagent2025}; \emph{WCFG + controller} owns traversal but disables property checks and repair; \emph{Full COVENANT} enables both.

\begin{table}[t]
\centering
\small
\setlength{\tabcolsep}{3.5pt}
\begin{tabular}{@{}p{0.43\columnwidth}rr@{}}
\toprule
Variant & Success & $\Delta$ from prior \\
\midrule
Base Claude Code & 34/85 (40.00\%) & reference \\
Controller + FlowAgent PDL & 36/85 (42.35\%) & +2.35 \\
WCFG + controller & 60/85 (70.59\%) & +28.24 \\
\shortstack[l]{Full COVENANT\\(WCFG + controller\\+ property checking)} & 63/85 (74.12\%) & +3.53 \\
\bottomrule
\end{tabular}
\caption{Runtime representation and control analysis with Claude Code and GLM-5.1.}
\label{tab:ablation}
\end{table}

Dependency-only control raises success from 40.00\% to 42.35\%; controller-owned WCFG traversal raises it to 70.59\%, and property checking with repair further raises it to 74.12\%. The largest gain therefore comes from explicit traversal, with a further 3.53-point checking gain. The FlowAgent-PDL row is a controlled hybrid rather than the official result in Table~\ref{tab:main-comparison}; because the variants span two implementation versions, we treat them as design evidence rather than causal estimates.

\subsection{RQ4: Cost}

Figure~\ref{fig:cost-overhead} reports normalized overhead over the same 3,000 paired instances.

\begin{figure}[t]
\centering
\includegraphics[width=\columnwidth]{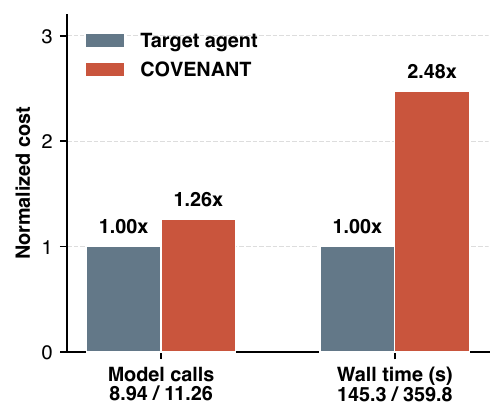}
\caption{Normalized model-call and wall-clock costs over 3,000 paired case instances. Values below each metric list Target agent / COVENANT means per case.}
\label{fig:cost-overhead}
\end{figure}

COVENANT raises model calls from 8.94 to 11.26 per case (1.26$\times$) and wall time from 145.31 to 359.84 seconds (2.48$\times$): controlled execution adds modest model calls but greater orchestration latency.

\noindent\textbf{Limitations and threats to validity.}
COVENANT does not yet guarantee faithful natural-language compilation, and its semantic verifiers may err. Appendix~\ref{app:limitations} details these boundaries and the scope of the controlled tests.

\section{Related Work}
\label{sec:related-work}

\paragraph{Workflow adherence and formal control.}
Instruction-following benchmarks evaluate response-level compliance, while GuideBench, ToolSandbox, and tau-bench extend evaluation to tool use and stateful interaction~\cite{ifeval2023,followbench2024,guidebench2025,lu2024toolsandbox,yao2024taubench}. SOPBench and JourneyBench further evaluate action permissibility, procedure completeness, or agreement with expected tool traces~\cite{sopbench2025,journeybench2026}. Agent-C and AgentLTL enforce temporal constraints over tool traces, but assume executable specifications that are manually encoded or synthesized from hand-written templates~\cite{agentc2025,agentltl2026}. COVENANT instead compiles free-form natural-language workflow instructions and defines alignment over the complete agent--environment trace.

\paragraph{Semantic parsing and process extraction.}
Semantic parsers and text-to-process systems map natural language into grammar-constrained structures or recover activities, branches, and control flow from procedural prose~\cite{kate2005learning,yin-neubig-2017-syntactic,friedrich2011process,bellan2023pet}. Process-conformance methods then compare observed traces with formal process models~\cite{rozinat2008conformance,vanderaa2018checking}. COVENANT adapts this separation between semantic interpretation and deterministic representation to LLM-agent execution, incorporating both environment observations and node-level realization into alignment.

\paragraph{Workflow generation and agent control.}
Routine, MermaidFlow, and Agent~SOP structure or generate agent workflows~\cite{routine2025,zheng2025mermaidflow,strandsagentsops2026}. FlowAgent supervises a procedure-description language with pre- and post-decision controllers, while RunAgent interprets plans written with reserved control constructs~\cite{flowagent2025,runagent2026}. Unlike these structured-input approaches, COVENANT begins with naturally authored, free-form instructions and uses its WCFG controller to own both traversal and committed state.

\section{Conclusion}
\label{sec:conclusion}

COVENANT treats natural-language workflow following as a complete-trace alignment problem and adapts compilation to agent execution. Its offline compiler transforms free-form instructions into a WAST and WCFG, while its runtime controller owns graph traversal and verifies each node before advancing. Across seven scenarios, the strongest configuration raises benchmark success from 50.00\% to 83.33\% and reduces attributed workflow-misalignment failures from 42.50\% to 15.83\%. These results demonstrate the value of explicit compilation and runtime control.

\clearpage
\appendix
% Appendix content for the combined arXiv preprint. The preamble, title,
% bibliography, and document boundaries are owned by main.tex.
\section{WAST-to-WCFG Construction}
\label{app:wcfg-lowering}

Algorithm~\ref{alg:wast-to-wcfg} gives the deterministic lowering procedure. \textsc{Lower} returns a subtree entry and its fall-through nodes, while \textsc{Link} connects them to the surrounding sequence or branch merge. \textsc{Task} and \textsc{AttachStateAccess} supply executable operations and state metadata.

\begin{algorithm}[H]
\caption{Deterministic lowering from a WAST to a WCFG}
\label{alg:wast-to-wcfg}
\footnotesize
\begin{algorithmic}[1]
\REQUIRE normalized WAST $A$ with root tree $T$
\ENSURE WCFG $G$
\STATE $G\gets\langle N=\emptyset,E=\emptyset,\Sigma(A)\rangle$
\STATE $s\gets\textsc{NewEntry}(G)$; $x\gets\textsc{NewExit}(G,\textsc{OutputSlots}(A))$
\STATE $(h,F)\gets\textsc{Lower}(T,G)$
\STATE $\textsc{AddEdge}(G,s,h)$
\STATE $\textsc{Link}(G,F,x)$
\STATE $\textsc{AttachStateAccess}(G,A)$; \textbf{return} $G$
\STATE \textbf{function} $\textsc{Lower}(t,G)$
\IF{$t=\texttt{seq}(t_1,\ldots,t_k)$}
  \STATE $(h,F)\gets\textsc{Lower}(t_1,G)$
  \FOR{$i=2$ to $k$}
    \STATE $(h_i,F_i)\gets\textsc{Lower}(t_i,G)$
    \STATE $\textsc{Link}(G,F,h_i)$
    \STATE $F\gets F_i$
  \ENDFOR
  \RETURN $(h,F)$
\ELSIF{$t=\texttt{step}(e_1,\ldots,e_m)$}
  \STATE $v\gets\textsc{NewBlock}(G,\textsc{Task}(t))$
  \RETURN $(v,\{v\})$
\ELSE
  \STATE $(g,t_{\mathsf{T}},t_{\mathsf{F}}?)\gets\textsc{Fields}(t)$
  \STATE $d\gets\textsc{NewDecision}(G,g)$; $m\gets\textsc{NewMerge}(G)$
  \STATE $(h_{\mathsf{T}},F_{\mathsf{T}})\gets\textsc{Lower}(t_{\mathsf{T}},G)$
  \STATE $\textsc{AddEdge}(G,d,h_{\mathsf{T}},\mathsf{true})$; $\textsc{Link}(G,F_{\mathsf{T}},m)$
  \IF{$t_{\mathsf{F}}$ exists}
    \STATE $(h_{\mathsf{F}},F_{\mathsf{F}})\gets\textsc{Lower}(t_{\mathsf{F}},G)$
    \STATE $\textsc{AddEdge}(G,d,h_{\mathsf{F}},\mathsf{false})$; $\textsc{Link}(G,F_{\mathsf{F}},m)$
  \ELSE
    \STATE $\textsc{AddEdge}(G,d,m,\mathsf{false})$
  \ENDIF
  \RETURN $(d,\{m\})$
\ENDIF
\end{algorithmic}
\end{algorithm}

\section{WCFG Path Soundness}
\label{app:wcfg-path-soundness}

Definition~\ref{def:workflow-alignment} defines $\mathcal{T}(W,I)$ as a set of complete action--observation traces, while a path through a WCFG is a sequence of graph nodes and branch edges. These objects therefore cannot be compared by direct set inclusion. Let $\operatorname{ctrl}_W(\tau)$ denote the \emph{control projection} of an interaction trace $\tau$: it retains the workflow-action labels and branch outcomes induced by $W$, while erasing environment observations and action arguments. For a set of traces $\mathcal{X}$, let $\operatorname{ctrl}_W(\mathcal{X})=\{\operatorname{ctrl}_W(\tau)\mid\tau\in\mathcal{X}\}$.

\noindent\begin{minipage}{\columnwidth}
\paragraph{WAST path language.}
Let $\mathcal{L}(t)$ be the control-path language of a WAST subtree $t$, and write $s=\texttt{step}(e_1,\ldots,e_m)$. The WAST grammar induces the following path language:
\begin{align*}
\mathcal{L}(s)
  &=\{\langle\textsc{Task}(s)\rangle\},\\
\mathcal{L}(\texttt{seq}(t_1,\ldots,t_k))
  &=\mathcal{L}(t_1)\cdots\mathcal{L}(t_k),\\
\mathcal{L}(\texttt{branch}(g,t_{\mathsf T},t_{\mathsf F}))
  &=\{\langle(g,\mathsf T)\rangle\}\mathcal{L}(t_{\mathsf T})\\
  &\quad\cup\{\langle(g,\mathsf F)\rangle\}\mathcal{L}(t_{\mathsf F}).
\end{align*}
\end{minipage}\par
Here juxtaposition denotes language concatenation; when the false subtree is absent, $\mathcal{L}(t_{\mathsf F})=\{\epsilon\}$. Let $\mathcal{L}_I(A_W)$ be the subset feasible for input $I$, meaning that every recorded branch outcome agrees with its guard under the action--observation prefix reaching that branch.

\paragraph{WCFG path language.}
For a WCFG $G_W$, let $\mathcal{P}_I(G_W)$ contain the projections of all feasible entry-to-exit paths. Projection erases \texttt{entry}, \texttt{merge}, and \texttt{exit} nodes, maps each \texttt{block} to its task label, and maps each outgoing \texttt{decision} edge to the corresponding $(g,\mathsf T)$ or $(g,\mathsf F)$ outcome. Feasibility uses the same guard evaluation as $\mathcal{L}_I(A_W)$.

\paragraph{Frontend soundness premise.}
Let $A_W$ be the WAST extracted from $W$. The natural-language frontend is \emph{path-sound} for input $I$ when
\begin{equation}
\mathcal{L}_I(A_W)
\subseteq
\operatorname{ctrl}_W\!\left(\mathcal{T}(W,I)\right).
\label{eq:frontend-path-soundness}
\end{equation}
This premise says that extraction may omit a permitted alternative, but it must not invent an action order or branch outcome that the workflow forbids. It is a semantic property of the WAST construction, not a consequence of graph-shape validation.

\begin{theorem}[Lowering Path Preservation]
\label{thm:lowering-path-preservation}
For any normalized WAST $A_W$ and input $I$, Algorithm~\ref{alg:wast-to-wcfg} produces a WCFG $G_W$ such that
\begin{equation}
\mathcal{P}_I(G_W)=\mathcal{L}_I(A_W).
\label{eq:lowering-path-preservation}
\end{equation}
\end{theorem}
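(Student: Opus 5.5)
The plan is structural induction on the WAST subtree $t$, proving a stronger invariant about the fragment that \textsc{Lower} produces. Fix $G$ and call $(h,F)=\textsc{Lower}(t,G)$. Let $N_t$ be the set of nodes created during this call. We define the \emph{fragment paths} of $t$ as the node sequences that satisfy three conditions: they start at $h$, they use only edges added during the call, and they end at some node of $F$. Let $\mathcal{P}(t)$ be their projections, using the erasure and labelling convention of $\mathcal{P}_I$. The invariant has three parts. (i) Every edge added during the call has both endpoints in $N_t$, and no node of $F$ has an outgoing edge inside the fragment. (ii) Every node of $N_t$ lies on some fragment path. (iii) $\mathcal{P}(t)=\mathcal{L}(t)$ as languages, before any feasibility filtering.

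We would also record that nodes in $F$ are always \texttt{block} or \texttt{merge} nodes. This fact lets \textsc{Link} safely attach outgoing edges to them. The fragment is also acyclic, since every edge points from an earlier-created or enclosing node to a node inside the current subtree's fragment.

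The induction has three cases. For \texttt{step}, the fragment is the single block $v$ with $F=\{v\}$, and its projection is $\langle\textsc{Task}(s)\rangle$, as required. For $\texttt{seq}(t_1,\ldots,t_k)$, \textsc{Link} adds edges from every node of $F_{i-1}$ to $h_i$. Part (i) ensures these are the only edges between the sub-fragments, so a fragment path decomposes uniquely as a path of $t_1$ followed by a path of $t_2$, and so on. Projection is a homomorphism on concatenation, so $\mathcal{P}(\texttt{seq})=\mathcal{L}(t_1)\cdots\mathcal{L}(t_k)$. For \texttt{branch}, every path leaves $d$ along either the true edge or the false edge. The true edge projects to $(g,\mathsf T)$ and is followed by a $t_{\mathsf T}$ fragment path ending in $F_{\mathsf T}$ and then $m$. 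The false edge works the same way, or goes directly to $m$ when $t_{\mathsf F}$ is absent, which yields $\{\epsilon\}$. The merge node $m$ is erased by projection, which gives exactly the defining union.

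At the top level, the entry edge goes to $h$ and every node of $F$ links to $x$. Since \texttt{entry} and \texttt{exit} are erased, $\mathcal{P}(G_W)=\mathcal{L}(A_W)$ holds before feasibility. \textsc{AttachStateAccess} changes only annotations and does not change edges.

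It remains to account for feasibility. The bijection built in the induction maps each WCFG path to a unique WAST path. It preserves, position by position, the sequence of task labels and branch outcomes, and therefore preserves the prefix that reaches each decision. Feasibility on both sides is defined by evaluating the same guard against that same prefix. So a path is feasible in $G_W$ if and only if its image is feasible in $A_W$, and this gives $\mathcal{P}_I(G_W)=\mathcal{L}_I(A_W)$.

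The main obstacle is the \texttt{seq} case. We must show that linking creates no unintended paths. In particular, a fall-through node must acquire no extra successor, and no path may re-enter an earlier sub-fragment. This depends on part (i) and on the fact that \textsc{Link} is applied exactly once to each fall-through set. We would prove this by tracking, in the invariant, that nodes of $F$ have out-degree zero when \textsc{Lower} returns. A secondary subtlety is the case of an empty word, such as a branch with an absent false subtree followed directly by another step. Here we would use the convention that projection erases \texttt{merge}, which makes concatenation with $\epsilon$ behave correctly.
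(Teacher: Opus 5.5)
Your proposal is correct and follows the paper's proof. Both use structural induction with the strengthened invariant that the projected paths from $h$ to the fall-through set $F$ are exactly $\mathcal{L}(t)$, the same \texttt{step}/\texttt{seq}/\texttt{branch} cases, then erasure of the global entry and exit and filtering both sides by the same guards; your edge-locality and out-degree-zero bookkeeping makes explicit what the paper asserts in one line for the \texttt{seq} case. One small slip, which does not affect the argument: your stated reason for acyclicity is wrong, since the \textsc{Link} edges from $F_{\mathsf T}$ into the enclosing merge $m$ run from later-created nodes to an earlier-created one, but acyclicity still follows inductively from your invariant and your proof does not rely on it.
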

\emph{Proof.}~We prove the stronger recursive invariant that, whenever $\textsc{Lower}(t,G)=(h,F)$, the projected paths from $h$ to the fall-through set $F$ are exactly $\mathcal{L}(t)$. For a \texttt{step}, the algorithm creates one block labeled by $\textsc{Task}(t)$, so the invariant holds directly. For a \texttt{seq}, the algorithm links every fall-through node of $t_i$ only to the entry of $t_{i+1}$. By the induction hypothesis, its paths are therefore exactly the concatenation $\mathcal{L}(t_1)\cdots\mathcal{L}(t_k)$. For a \texttt{branch}, the new decision has exactly one true edge and one false edge. Each edge enters the corresponding recursively lowered subtree, and both subtrees reconnect only at the new merge node; when the false subtree is absent, the false edge goes directly to that merge. Erasing the decision and merge structure while retaining the selected edge yields exactly the two alternatives in $\mathcal{L}(t)$. Finally, adding the global entry and exit nodes and attaching state access do not change the projected language. Thus $\mathcal{P}(G_W)=\mathcal{L}(A_W)$; filtering both sides by the same guards gives Equation~\ref{eq:lowering-path-preservation}.~$\square$

\begin{theorem}[WCFG Path Soundness]
\label{thm:wcfg-path-soundness}
If the frontend satisfies Equation~\ref{eq:frontend-path-soundness}, then every feasible path of the generated WCFG is permitted by the source workflow at the control-flow level:
\begin{equation}
\mathcal{P}_I(G_W)
\subseteq
\operatorname{ctrl}_W\!\left(\mathcal{T}(W,I)\right).
\label{eq:wcfg-path-soundness}
\end{equation}
\end{theorem}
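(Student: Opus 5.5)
The plan is a direct composition of the two results already in place. Theorem~\ref{thm:lowering-path-preservation} identifies the feasible projected WCFG paths with the feasible WAST path language, and the frontend soundness premise in Equation~\ref{eq:frontend-path-soundness} places the latter inside the control projection of the permitted traces. Chaining these gives
\[
\mathcal{P}_I(G_W)=\mathcal{L}_I(A_W)\subseteq\operatorname{ctrl}_W\!\left(\mathcal{T}(W,I)\right),
\]
which is exactly Equation~\ref{eq:wcfg-path-soundness}.

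The steps, in order, are as follows. First, fix $W$, $I$, the extracted WAST $A_W$ (assumed normalized, so that Algorithm~\ref{alg:wast-to-wcfg} applies), and the WCFG $G_W$ produced by lowering. Second, take an arbitrary $p\in\mathcal{P}_I(G_W)$, the projection of a feasible entry-to-exit path. Third, invoke Theorem~\ref{thm:lowering-path-preservation} to conclude $p\in\mathcal{L}_I(A_W)$. Fourth, apply Equation~\ref{eq:frontend-path-soundness} to obtain a trace $\tau\in\mathcal{T}(W,I)$ with $\operatorname{ctrl}_W(\tau)=p$. Since $p$ was arbitrary, the inclusion follows.

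The one point needing care is that both sides of each relation live in the same alphabet and use the same feasibility notion. The projected WCFG paths consist of task labels and $(g,\mathsf T)/(g,\mathsf F)$ outcomes, and these match the symbols of $\mathcal{L}$ because of how the projection was defined (entry, merge and exit erased; blocks mapped to $\textsc{Task}$ labels). The feasibility filter is stated to use identical guard evaluation on both sides, so the index $I$ lines up across the chain. I would state these two observations explicitly so that the use of equality in the chain is justified.

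I expect no substantive obstacle. The real semantic burden sits in the frontend premise, which is assumed, and the structural burden sits in Theorem~\ref{thm:lowering-path-preservation}, which is given. The hardest part is the bookkeeping above: checking that the alphabets and the guard-based feasibility filter coincide, so that ``$=$'' followed by ``$\subseteq$'' composes without a hidden mismatch. I would also remark that the converse inclusion need not hold, since the frontend may omit permitted alternatives.
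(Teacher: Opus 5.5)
Your proposal is correct and matches the paper's proof, which simply substitutes the equality $\mathcal{P}_I(G_W)=\mathcal{L}_I(A_W)$ from Theorem~\ref{thm:lowering-path-preservation} into the frontend path-soundness premise of Equation~\ref{eq:frontend-path-soundness}. Your added bookkeeping, checking that the alphabets and the guard-based feasibility filter coincide, makes explicit what the paper leaves implicit.
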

\emph{Proof.}~Substitute Equation~\ref{eq:lowering-path-preservation} into the frontend-soundness premise in Equation~\ref{eq:frontend-path-soundness}.~$\square$

Theorem~\ref{thm:wcfg-path-soundness} establishes path-level alignment, but not yet the full membership condition in Definition~\ref{def:workflow-alignment}. A controller that advances only along feasible WCFG edges has a visited-node path in $\mathcal{P}_I(G_W)$. To further conclude that its complete realized trace $\tau_{\pi,\mathsf{Env}}(W,I)$ belongs to $\mathcal{T}(W,I)$, every visited block must also execute its represented action with conforming arguments, evidence, effects, and output. Runtime property checking targets this second condition; following the WCFG alone cannot establish it.

\section{Runtime Controller Algorithm}
\label{app:algorithm}

Algorithm~\ref{alg:runtime-controller} formalizes controller-owned traversal together with runtime verification and repair. It initializes the interaction history with $h_0=\langle I\rangle$ as in Section~\ref{sec:workflow-execution}. \textsc{RunNodeAgent} executes policy $\pi$ in the agent's environment and returns both a structured response and the resulting history; any action--observation pair already produced remains in that history even if the response is later rejected. \textsc{ParseAndProtocolChecks} applies the deterministic exchange checks, while \textsc{SemanticChecks} evaluates the enabled properties listed in Appendix~\ref{app:semantic-properties}. The merge notation $X\bowtie p$ means that only the state fields declared by patch $p$ are written into committed controller state $X$.

\begin{algorithm}[!t]
\caption{Controller-guided WCFG execution with verification and repair}
\label{alg:runtime-controller}
\footnotesize
\begin{algorithmic}[1]
\REQUIRE compiled $\mathsf{WCFG}_W$, workflow $W$, input $I$, policy $\pi$, environment $\mathsf{Env}$, retry budget $b$
\ENSURE realized interaction trace and output $y$, or \textsc{Blocked}
\STATE $h\gets\langle I\rangle$; $X\gets\emptyset$; $c\gets\textsc{Entry}(\mathsf{WCFG}_W)$
\WHILE{true}
  \IF{$c$ is an entry or merge node}
    \STATE $c\gets\textsc{UniqueSuccessor}(c,\mathsf{WCFG}_W)$
  \ELSE
    \STATE $j\gets 0$; $accepted\gets 0$; $feedback\gets\emptyset$
    \WHILE{$j\le b$ \textbf{and} $accepted=0$}
      \STATE $u\gets\textsc{BuildRequest}(c,X,h,feedback)$
      \STATE $(r,h)\gets\textsc{RunNodeAgent}(\pi,\mathsf{Env},W,u,h)$
      \STATE $(r,F)\gets\textsc{ParseAndProtocolChecks}(c,r,X)$
      \IF{$F=\emptyset$}
        \STATE $P\gets\textsc{Properties}(c)$
        \STATE $F\gets\textsc{SemanticChecks}(c,P,r,X,h)$
      \ENDIF
      \IF{$F=\emptyset$}
        \STATE $accepted\gets 1$
      \ELSE
        \STATE $\textsc{Discard}(r.\mathit{patch})$; $feedback\gets F$; $j\gets j+1$
      \ENDIF
    \ENDWHILE
    \IF{$accepted=0$}
      \RETURN $(h,\textsc{Blocked})$
    \ENDIF
    \STATE $X\gets X\bowtie r.\mathit{patch}$ \COMMENT{commit only after all checks pass}
    \IF{$c$ is an exit node}
      \STATE $y\gets r.\mathit{output}$
      \RETURN $(h,y)$
    \ENDIF
    \STATE $c\gets\textsc{SelectEdge}(c,r.\mathit{branch},\mathsf{WCFG}_W)$
  \ENDIF
\ENDWHILE
\end{algorithmic}
\end{algorithm}

\paragraph{Running example.}
Consider the Grand Canyon workflow in Figure~\ref{fig:compilation-example}. Initially, $h_0$ contains the user's request for the weather at the Grand Canyon. At the decision ``location provided?'', the controller exposes only this guard and its relevant context. Because $I$ already contains the location, policy $\pi$ selects the true branch, and the controller follows the encoded edge through the ``use location'' block and merge. At the subsequent weather-tool block, the structured response contains an action such as $a'_t=\textsc{WeatherTool}(\text{Grand Canyon})$, and the environment returns $o_{t+1}$. If the agent instead calls the location tool, skips directly to the final response, or reports a temperature unsupported by $o_{t+1}$, the controller keeps the cursor at the weather-tool block and returns the failed checks as repair feedback. Only an accepted result is committed before traversal advances to ``return weather.''

\FloatBarrier
\section{Semantic Properties and Controller Enforcement}
\label{app:semantic-properties}

Table~\ref{tab:semantic-properties} enumerates the semantic properties currently serialized in WCFG node contracts. A property states what an accepted node result must satisfy; protocol checks and semantic verifiers are the mechanisms that evaluate it. Properties are generated during compilation but evaluated against runtime responses and history. If the compiler cannot bind an extracted requirement to a node unambiguously, it retains the requirement as an audit record rather than enabling an unsound blocking check.

\begin{table}[!t]
\centering
\footnotesize
\setlength{\tabcolsep}{3pt}
\begin{tabular}{@{}p{0.31\columnwidth}p{0.63\columnwidth}@{}}
\toprule
Property ID & Scope, required condition, and runtime check \\
\midrule
\shortstack[l]{\texttt{source\_}\\\texttt{alignment}} & Executable nodes. Facts, values, choices, and state transitions must be grounded in the instruction span, accepted state, or observations. Checked through evidence records and the semantic verifier. \\
\shortstack[l]{\texttt{constraint\_}\\\texttt{satisfaction}} & Nodes with conditions or constraints. Applicable conditions, exceptions, and prohibitions must hold before accepting a decision or effect. Checked through guard/state rules and the semantic verifier. \\
\shortstack[l]{\texttt{candidate\_}\\\texttt{coverage}} & Derive, select, compute, and decide nodes. Every relevant candidate must be selected, rejected, deferred, or marked inapplicable with evidence. Checked through candidate/plan ledgers and the semantic verifier. \\
\shortstack[l]{\texttt{objective\_}\\\texttt{consistency}} & Nodes with an explicit objective. The selected result must satisfy that objective over feasible candidates and constraints. Checked through plan-ledger, numeric-consistency, and semantic checks. \\
\shortstack[l]{\texttt{output\_}\\\texttt{contract}} & Exit nodes. Required output fields and their workflow-defined meanings must be present. Checked through deterministic schema/required-field rules and the semantic verifier. \\
\shortstack[l]{\texttt{final\_trace\_}\\\texttt{consistency}} & Exit nodes. The final output must agree with the selected result, calculations, evidence, accepted state, and observed effects. Checked through ledger, numeric, effect-binding, and semantic checks. \\
\bottomrule
\end{tabular}
\caption{Semantic properties implemented by COVENANT.}
\label{tab:semantic-properties}
\end{table}

\paragraph{Controller enforcement.}
The properties above specify what an accepted node result must satisfy, while Algorithm~\ref{alg:runtime-controller} determines how their verdicts control execution. For every property enabled as a blocking check, the following guarantees establish that a violating proposal cannot change committed controller state or advance the WCFG.

\begin{theorem}[Core State Admission]
\label{thm:gate-committed}
Every state update committed by the reusable controller has passed every blocking check enabled for its proposal.
\end{theorem}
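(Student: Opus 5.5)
The plan is to argue directly from the structure of Algorithm~\ref{alg:runtime-controller}, by a program-invariant argument. The first step is to locate every statement that can modify the committed controller state $X$. Initialization sets $X\gets\emptyset$, and this is not an update derived from any proposal. Inside the main loop the only assignment to $X$ is the line $X\gets X\bowtie r.\mathit{patch}$. The rejection branch calls only $\textsc{Discard}(r.\mathit{patch})$ and changes $\mathit{feedback}$ and $j$, never $X$. The functions $\textsc{RunNodeAgent}$, $\textsc{ParseAndProtocolChecks}$ and $\textsc{SemanticChecks}$ read $X$ but return new values only for $r$, $h$ and $F$. So I will treat the reusable controller as having exactly one commit point. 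This rests on the assumption, implicit in the algorithm's signature, that these subroutines have no side effects on $X$.

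The second step is to show that the commit line executes only when $\mathit{accepted}=1$, and that $\mathit{accepted}=1$ only right after an iteration in which $F=\emptyset$. The inner loop exits in one of two ways. If it exits with $\mathit{accepted}=0$, the controller returns \textsc{Blocked} before reaching the commit. If it exits with $\mathit{accepted}=1$, the variable was set to $1$ in the last iteration, and the guard $F=\emptyset$ held immediately beforehand. Because $\mathit{accepted}$ is reset to $0$ at every new node and the loop stops as soon as it becomes $1$, the $r$ that reaches the commit is exactly the $r$ produced in that accepting iteration. In particular, it is not a stale patch from an earlier rejected attempt.

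The third step is to unpack what $F=\emptyset$ means. In the accepting iteration, $F$ was first set by $\textsc{ParseAndProtocolChecks}$. It was empty, since otherwise the semantic branch would be skipped and $F$ would remain nonempty. $F$ was then overwritten by $\textsc{SemanticChecks}(c,\textsc{Properties}(c),r,X,h)$, and that result was also empty. Under the convention that $F$ collects the failures of every enabled blocking check, with protocol checks in the first call and the blocking properties in $\textsc{Properties}(c)$ in the second, emptiness means every blocking check enabled for this proposal passed. Finally, the merge $\bowtie$ writes only the fields declared by that same patch $r.\mathit{patch}$, so the committed update is exactly the checked proposal's patch.

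The main obstacle is this interface assumption. The conclusion requires that the two check calls together evaluate all blocking checks enabled for the proposal, and that they report every failure in $F$. I would state this as the specification of $\textsc{ParseAndProtocolChecks}$ and $\textsc{SemanticChecks}$, namely completeness of the reported failure set, rather than try to prove it. I would also note that non-blocking audit records (Appendix~\ref{app:semantic-properties}) are excluded by the phrase ``enabled as a blocking check.'' Finally, I would make explicit that $h$ may grow during rejected attempts; this is consistent with the theorem, because the theorem concerns the committed state $X$ and not the history.
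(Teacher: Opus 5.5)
Your proposal is correct and makes essentially the paper's argument: the state-merge line of Algorithm~\ref{alg:runtime-controller} is reached only after both \textsc{ParseAndProtocolChecks} and \textsc{SemanticChecks} return an empty failure set $F$. You spell out details the paper's one-line proof leaves implicit, namely the single commit point, that the committed patch comes from the accepting attempt rather than an earlier rejected one, and the assumption that $F$ reports every failing enabled blocking check.
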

\emph{Proof.}~Algorithm~\ref{alg:runtime-controller} reaches its state-merge statement only when both protocol and enabled semantic-property checks return no blocking finding. Therefore every committed patch has passed every enabled blocking check.~$\square$

\begin{theorem}[Core Rejected-Patch Isolation]
\label{thm:gate-monotonicity}
If all attempts for a reusable-controller step are rejected, none of their state updates modifies committed controller state; a retry begins from the last accepted state.
\end{theorem}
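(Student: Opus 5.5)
The plan is a direct inspection of Algorithm~\ref{alg:runtime-controller}. I treat committed controller state $X$ as the only state the theorem concerns. The first step is to identify every line of the algorithm that writes to $X$. Apart from the initialization $X\gets\emptyset$, the only write is the merge $X\gets X\bowtie r.\mathit{patch}$. That merge sits after the inner retry loop and is guarded by the test that returns \textsc{Blocked} when $accepted=0$. \textsc{BuildRequest}, \textsc{RunNodeAgent}, \textsc{ParseAndProtocolChecks}, and \textsc{SemanticChecks} all take $X$ as a read-only argument. I will state this as the modelling assumption implicit in the algorithm: these routines may extend the history $h$ but never assign to $X$.

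The argument itself is an induction over the attempts $j=0,1,\ldots$ of the inner loop for a fixed cursor $c$. Let $X_c$ be the value of $X$ on entry to the \texttt{else} branch for $c$; this is the last accepted state. The invariant is that at the start of every attempt $X=X_c$, and that each proposal so far has either had $F\neq\emptyset$ or none has been committed. The base case holds on entry. For the step, a rejected attempt executes only $\textsc{Discard}(r.\mathit{patch})$, the assignment to $\mathit{feedback}$, and the increment of $j$, none of which touches $X$. The next \textsc{BuildRequest} therefore again receives $X_c$, which is the claim that a retry begins from the last accepted state.

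If all attempts are rejected, the loop exits with $accepted=0$ and the algorithm returns $(h,\textsc{Blocked})$ before reaching the merge line. No rejected patch is ever merged, so committed state remains $X_c$. The cursor $c$ is also unchanged, because \textsc{SelectEdge} is likewise unreachable. For completeness, I will note the case where a later attempt is accepted. By Theorem~\ref{thm:gate-committed}, only that accepted patch is merged, and it is merged into $X_c$, so earlier rejected patches still contribute nothing.

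The main obstacle is the read-only assumption on the subroutines, especially \textsc{RunNodeAgent}. Agent actions do extend $h$ and the environment even on rejection, and the algorithm description says these pairs remain in the history. I will handle this by stating explicitly that the theorem concerns the committed controller state $X$, not the interaction history or the external environment. This matches the paper's remark that action--observation pairs persist after rejection.
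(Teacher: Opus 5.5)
Your proposal is correct and follows the paper's argument: the only write to committed state $X$ is the merge after the retry loop, which a rejected attempt never reaches, so each retry rereads the last committed state. Your induction over attempts, the explicit read-only assumption on the subroutines, and the restriction of scope to $X$ rather than the history or environment make precise what the paper's one-line proof leaves implicit.
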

\emph{Proof.}~The rejection branch discards the proposed patch before the state-merge statement, so a retry reads the last committed controller state.~$\square$

\paragraph{Scope.}
These guarantees cover committed controller state, not complete workflow semantics. They exclude audit-only properties, paths that bypass nodewise control, semantic errors in the compiled graph, and external environment state or actions. Because an external action may occur before post-execution checking, a rejected attempt can remain in the interaction history unless the environment exposes a pre-execution boundary.

\section{Detailed Evaluation Protocol}
\label{app:evaluation-details}

\paragraph{Workload composition.}
The 120 cases are grouped by workflow setting and official scoring procedure. GuideBench contributes 15 price-rule cases, which stress whether a rule is applied only under its stated conditions. ToolSandbox contributes argument canonicalization (19 cases), insufficient information (17), multiple user turns (19), and state dependency (16). Tau-bench contributes airline (16) and retail (18), both of which require every state change to follow benchmark policy. We retain GuideBench's label comparison, ToolSandbox's scenario evaluator, and tau-bench's environment reward.

\paragraph{Paired agents and models.}
An \emph{agent interface} is the prompting and tool-execution shell through which a base model receives and performs a task. The five interfaces are plain prompting, ReAct, Claude Code, OpenClaw, and Hermes Agent; the five models are GLM-5.1, GLM-5.2, MiniMax-M2.7, Kimi-K2.7-Code, and DeepSeek-V4-Pro. Each interface--model cell runs all 120 cases once with the target agent alone and once with COVENANT controlling it. COVENANT adds compiler, controller, and verifier calls, so the pair fixes the base model but not compute. We rerun model-service failures such as timeouts or unavailable endpoints and merge replacement runs by case identifier. All 25 cells have zero unresolved service failures and retain official evaluator outputs and execution traces.

\paragraph{Execution modes and artifacts.}
GuideBench uses fused execution, in which one structured model call receives the full compiled graph. ToolSandbox and tau-bench use mediated execution: an adapter derives a pending action from the task, transcript, and tool schema, checks the proposal, and passes an accepted action to the benchmark environment. These adapters include benchmark-specific argument normalization, validation, and proposal-repair rules. Table~\ref{tab:main-comparison} therefore evaluates the complete integrations rather than the reusable controller in isolation. COVENANT runs use fixed WAST and WCFG artifacts generated during development; the evaluation does not independently establish that compilation preserves every source requirement.

\paragraph{Workflow-misalignment annotation.}
For every execution that fails its native evaluator, GLM-5.1 receives the saved execution record and assigns trace misalignment, execution misalignment, or no supported attribution under Appendix~\ref{app:workflow-misalignment}. We accept a misalignment label only at confidence at least 0.8 and with no review flag; ambiguous failures remain in the denominator but not the numerator. Target-agent and COVENANT executions use the same prompt, taxonomy, acceptance rule, and denominator. Confidence intervals use 10,000 cluster-bootstrap resamples with seed 20260720, resampling the 120 workflow cases while retaining all agent--model repetitions associated with each sampled case.

\paragraph{Single-only workflow systems.}
The lower panel of Table~\ref{tab:main-comparison} provides unpaired context from systems that generate or execute structured workflows. MermaidFlow lacks a tau-bench action adapter for 34 of 120 cases per model, and several NLAH/LinguaClaw and MermaidFlow runs lack complete traces. Unsupported cases remain failures in the stated denominator; the reported gaps therefore contextualize system-level coverage and success rather than estimate a paired COVENANT effect.

\section{Limitations and Threats to Validity}
\label{app:limitations}

The natural-language frontend does not provide an end-to-end soundness guarantee that every WAST and WCFG faithfully captures its source instructions; semantic validation of this compilation step remains future work. Semantic verification is also selective: some properties are retained only for audit, and an LLM verifier can reject a correct result or accept an incorrect one. Theorem~\ref{thm:gate-committed} therefore establishes controller-state admission rather than end-to-end semantic correctness.

The ablation combines two saved implementation versions and shows a small checking gain on 85 cases but a small loss on all 120, so it provides design evidence rather than a clean causal estimate. The controlled stress tests isolate two structural pressures but do not represent all dependencies in production workflows.

\section{Per-Scenario Results and Misalignment Analysis}
\label{app:per-domain}

All 25 paired cells have zero unresolved model-service failures after case-level replacement. Across their 3,000 paired instances, both settings pass 1,112 and both fail 623; the remaining outcomes are 1,081 recoveries and 184 regressions under the definitions in RQ2. Table~\ref{tab:micro-deltas} reports the corresponding cell-level gains.

\begin{table}[!b]
\centering
\small
\setlength{\tabcolsep}{3.5pt}
\begin{tabular}{l rrrrr}
\toprule
Agent interface & \rotatebox{55}{GLM-5.1} & \rotatebox{55}{GLM-5.2} & \rotatebox{55}{MiniMax} & \rotatebox{55}{Kimi} & \rotatebox{55}{DeepSeek} \\
\midrule
Prompting & +33.33 & +32.50 & +32.50 & +38.33 & +27.50 \\
ReAct & +35.00 & +34.17 & +30.00 & +30.00 & +31.67 \\
Claude Code & +26.67 & +34.17 & +29.17 & +22.50 & +25.00 \\
OpenClaw & +36.67 & +35.83 & +29.17 & +30.00 & +27.50 \\
Hermes & +33.33 & +35.00 & +20.83 & +10.00 & +26.67 \\
\bottomrule
\end{tabular}
\caption{Task-success differences in percentage points: COVENANT minus the base interface.}
\label{tab:micro-deltas}
\end{table}

\section{Workflow Misalignment}
\label{app:workflow-misalignment}

Under the benign-environment assumption in Section~\ref{sec:workflow-execution}, the environment may faithfully execute a \emph{wrong} action, but it does not itself introduce the deviation. Inspection of policy failures reveals two recurring manifestations. In \emph{trace misalignment}, the policy drifts to the wrong procedural path by selecting an unsupported branch, skipping a required step, or changing the intended order. In Figure~\ref{fig:motivating}, the user already provides ``Grand Canyon,'' but the agent takes the branch that tries to resolve the location again. In \emph{execution misalignment}, the procedural path is appropriate but the policy executes a selected step incorrectly. In the airline example in Figure~\ref{fig:motivating}, the profile-tool step requires calling the profile tool to retrieve the passenger's date of birth, but the agent fabricates a value.

A further investigation shows that workflow alignment degrades as workflow instructions contain more branch points and as individual workflow steps involve more environment interactions. To examine these two factors in isolation, we conduct controlled experiments with a ReAct agent powered by GLM-5.1, as shown in Figure~\ref{fig:synthetic-scaling}. We report the \emph{alignment rate}: the fraction of cases whose complete execution satisfies the alignment requirement in Equation~\ref{eq:workflow-alignment}. In the branch-selection study, this requires the produced branch trace to exactly match the oracle path. As the number of successive branch points increases from 2 to 10, alignment falls from 80\% to 25\%, a decrease of 55 percentage points. In the tool-interaction study, every prescribed interaction and resulting value must satisfy the corresponding workflow step. Across the tested range, increasing the required interactions within one workflow step from 3 to 27 reduces alignment from 100\% to 66.7\%, a decrease of 33.3 percentage points, although the intermediate levels are not monotonic.

More branch points increase the number and depth of path decisions that the agent must preserve, creating more opportunities for trace misalignment. More environment interactions require the agent to issue more actions, track more feedback, and bind more returned values, creating more opportunities for execution misalignment. These patterns amplify well-known LLM limitations in complex tasks: long contexts and distractors weaken evidence retrieval~\cite{liu2024lost,shi2023distract}, while compositional tasks expose a gap between solving individual subproblems and their conjunction~\cite{press2024compositionality}.

\begin{figure}[!t]
\centering
\includegraphics[width=\columnwidth]{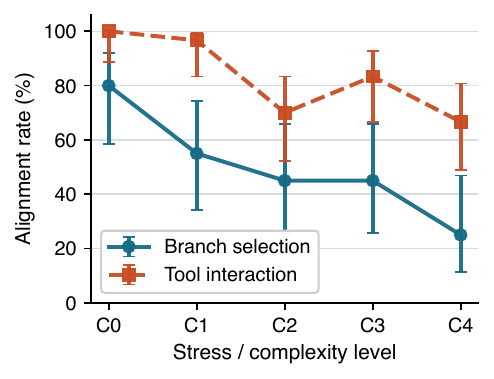}
\caption{Controlled stress tests of workflow alignment, using 20 branch-selection cases and 30 tool-interaction cases per level.}
\label{fig:synthetic-scaling}
\end{figure}

\bibliography{references}

\end{document}